%
\documentclass[default]{sn-jnl}


\usepackage{graphicx}%
\usepackage{multirow}%
\usepackage{amsmath,amssymb,amsfonts}%
\usepackage{amsthm}%
\usepackage{mathrsfs}%
\usepackage[title]{appendix}%
\usepackage{xcolor}%
\usepackage{textcomp}%
\usepackage{manyfoot}%
\usepackage{booktabs}%
\usepackage{algorithm}%
\usepackage{algorithmicx}%
\usepackage{algpseudocode}%
\usepackage{listings}%

\theoremstyle{thmstyleone}%
\newtheorem{theorem}{Theorem}
%

\theoremstyle{thmstyletwo}%
\newtheorem{example}{Example}%

\theoremstyle{thmstylethree}%
\newtheorem{problem}{Problem}%
\newtheorem{lemma}{Lemma}%

\newtheorem{proof*}{Proof}%

\raggedbottom

\begin{document}

\title[Networks of Classical Conditioning Gates and Their Learning]{Networks of Classical Conditioning Gates and Their Learning}

\author*[1]{\fnm{Shun-ichi} \sur{Azuma}}\email{sazuma@i.kyoto-u.ac.jp}

\author[2]{\fnm{Dai} \sur{Takakura}}\email{ultradait@gmail.com}

\author[3]{\fnm{Ryo} \sur{Ariizumi}}\email{ryoariizumi@go.tuat.ac.jp}

\author[4]{\fnm{Toru} \sur{Asai}}\email{asai@nuem.nagoya-u.ac.jp}

\affil*[1]{\orgdiv{Graduate School of Informatics}, \orgname{Kyoto University}, \orgaddress{\street{Yoshida-honmachi, Sakyo-ku}, \city{Kyoto}, \postcode{606-8501},  \country{Japan}}}

\affil[3]{\orgdiv{Faculty of Engineering}, \orgname{Tokyo University of Agriculture and Technology}, \orgaddress{\street{2-24-16 Naka-cho}, \city{Koganei}, \postcode{184-8588},  \country{Japan}}}

\affil[4]{\orgdiv{Graduate School of Engineering}, \orgname{Nagoya University}, 
\orgaddress{\street{ Furo-cho, Chikusa-ku}, \city{Nagoya}, \postcode{464-8603},  \country{Japan}}}

\abstract{
Chemical AI is chemically synthesized artificial intelligence that has the ability of learning in addition to information processing. A research project on chemical AI, called the {\it Molecular Cybernetics Project}, was launched in Japan in 2021 with the goal of creating 
a molecular machine that can learn a type of conditioned reflex through 
the process called {\it classical conditioning}. 
If the project succeeds in developing such a molecular machine, the next step would be to configure a network of such machines to realize more complex functions. 
With this motivation, this paper develops a method for learning a desired function 
in the network of nodes each of which can implement 
classical conditioning. 
First, we present a model of classical conditioning, which is called 
here a {\it classical conditioning gate}. 
We then propose a learning algorithm for the network of classical conditioning gates.}

\keywords{network, learning, classical conditioning, Pavlov's dog, logic gate.}



\maketitle

\section{Introduction}
The development of DNA nanotechnology has raised expectations 
for {\it chemical AI}. 
Chemical AI is chemically synthesized artificial intelligence that has the ability of learning in addition to information processing.
A research project on chemical AI, called the {\it Molecular Cybernetics Project}, 
was launched in Japan in 2021 \cite{sousetsu}, 
with the goal of establishing an academic field called {\it molecular cybernetics}.

One of the milestones of the project is to create a molecular machine that can 
learn a type of conditioned reflex through 
the process called {\it classical conditioning} \cite{Pavlov}. 
Classical conditioning is the process of acquiring a conditioned reflex by giving 
an conditioned stimulus with an unconditioned stimulus.
It was discovered by the well-known psychological experiment of ``Pavlov's dog'': 
\begin{itemize}
\item if one feeds a dog repeatedly while ringing of a bell,
then the dog will eventually begin to salivate at the sound of the bell;   
\item If one just rings a bell repeatedly without doing anything else for the dog that salivates at the sound of the bell, the dog will stop salivating at the sound of the bell,
\end{itemize}
which are respectively called the {\it acquisition} and {\it extinction}. 
This project attempts to create liposomes with different functions and combine them to artificially achieve a function similar to classical conditioning.

If the milestone is achieved, the next step would be to configure a network of such machines to realize more complex functions. Therefore, it is expected to establish a learning method for such a network. However, there exists no method because the learning has to be performed by the interaction of classical conditioning on the network, which is completely different from what is being considered these days.
In fact, neural networks are well known as a learning model, where    
learning is performed by adjusting weights between nodes \cite{Neural_book}, not by providing external inputs as in classical conditioning. On the other hand, Boolean networks \cite{Kauffman} 
are known as a model of the network of logic gates and 
their learning has been studied, e.g., in \cite{Sun2022}; 
but it also differs from learning by providing external inputs.

In this paper, we develop a method for learning a desired function 
in a network of nodes each of which can implement classical conditioning.
First, classical conditioning is modeled as a time-varying logic gate 
with two inputs and single output. The two inputs correspond to the feeding and bell ringing  in Pavlov's dog experiment, and the gate operates as either a YES gate or an OR gate 
at each time. 
The gate state, which is either YES or OR, is determined by how the inputs are given, 
in a similar manner to classical conditioning. 
The model is called here a {\it classical conditioning gate}.
Based on this model, the network of classical conditioning gates and 
its learning problem are formulated. 
We then derive a key principle to solving the problem,
called the {\it flipping principle}, that the gate state of any node in the network can 
be flipped while preserving the state of some other nodes.
By the flipping principle, we present a learning algorithm 
to obtain a desired function on the network.

Finally, we note the terminology and notation used in this paper.
We consider two types of logical gates, logical YES and logical OR.
Table~\ref{tab: truce table} shows the truth tables.   
We use  $x_1\vee x_2$ to represent the logical OR operation of the binary variables $x_1$ and $x_2$.
Moreover, let $\bigvee_{i\in {\mathbf I}} x_{i}$ denote the logical OR operation of $x_{i}$ 
with respect to  all the indeces in a finite set ${\mathbf I}$.

\begin{table}[tb]
 \caption{Truth tables of logical YES and OR.\vspace*{-2mm}}
 
 \label{tab: truce table}
  \begin{tabular}{|c|c||c|c|}
    \hline
     Input 1 &  Input 2  & Output (YES)  & Output (OR)   \\
    \hline  \hline
      0 &  0  &  0  &  0  \\
    \hline
      0 & 1   &  0  &  1  \\
    \hline
      1 & 0   &  1  &  1  \\
    \hline
      1 & 1   &  1  &  1  \\
    \hline
  \end{tabular}
\end{table}

\section{System Modeling}
\subsection{Classical Conditioning Gates}
We model classical conditioning
as shown in Fig.~\ref{fig: classical conditioning device}. 
It is a two-state machine that switches 
between the states ``YES'' and ``OR'' based on the two input signals taking a binary value.    
When the state is ``YES'', the gate operates as a logical YES gate,
whose output is equal to the first input, as shown in Table~\ref{tab: truce table}.  
On the other hand, 
when the state is ``OR'', 
the gate operates as a logical OR  gate,  
whose output is equal to 1 if and only if 
at least one of the two inputs is equal to 1. 
The state changes
when two types of {\it training inputs} are applied.  
When the state is ``YES'', 
the state is changed to  ``OR'' by entering the value $(1,1)$ several times in row.
On the other hand, when the state is ``OR'', 
the state is changed to  ``YES'' by entering the value $(0,1)$ several times in row.

This model can be interpreted in terms of Pavlov's dog experiment as follows.
The state ``YES'' corresponds to responding only when the dog is being fed, while  
the state ``OR'' corresponds to responding when the dog is being fed or hears the bell. 
Then the input value $(1,1)$ is interpreted as the stimulus for acquisition, i.e.,  
feeding with bell ringing, and  
$(0,1)$ is interpreted as  the stimulus for extinction, i.e., bell
ringing without feeding. 


The model of the above classical conditioning gate is expressed as   
\begin{eqnarray}
\left\{
\begin{array}{l}
x(t+1) = 
\left\{
\begin{array}{ll}
{\rm OR} &~{\rm if}~x(t-s+1)={\rm YES}, \\
         & ~~~~(v(\tau),w(\tau)) = (1,1)~ (\tau=t,t-1,\ldots,t-s+1),\\
{\rm YES} &~{\rm if}~ x(t-s+1)={\rm OR}, \\
          & ~~~~ (v(\tau),w(\tau)) = (0,1)~ (\tau=t,t-1,\ldots,t-s+1),\\
x(t)& ~{\rm otherwise},\\
\end{array}
\right.  \\ [11mm]
y(t) = 
\left\{
\begin{array}{lll}
v(t) &~{\rm if}~{\rm YES}, \\
v(t) \vee w(t) &~{\rm if}~{\rm OR}, 
\end{array}
\right.
\end{array}
\right. 
\label{eq: 1-node model}
\end{eqnarray}
where $x(t)\in\{{\rm YES}, {\rm OR}\}$ is the state, 
$v(t)\in\{0,1\}$ and $w(t)\in\{0,1\}$ are the inputs, 
$y(t)\in\{0,1\}$ is the output,
and $s\in\{1,2,\ldots\}$ is a period, called the {\it unit training time}. 
The state equation represents classical conditioning,   
while 
the output equation represents 
the resulting input-output relation at time $t$. 

The following result presents a basic property of 
(\ref{eq: 1-node model}), 
which will be utilized for training a network of classical conditioning gates.

\begin{lemma}\label{lem: state invariant property for 1 node}
Consider the classical conditioning gate in (\ref{eq: 1-node model})
with $x(t)=\bar{x}$, where  $t\in\{0,1,\ldots\}$ and
$\bar{x} \in\{{\rm YES}, {\rm OR}\}$ are arbitrarily given.
Then the following statements hold. \\
(i) If $(v(t),w(t)) = (0,0)$, 
then $y(t)=0$ and $x(t+1)=x(t)$. \\
(ii) If $(v(t),w(t)) = (1,0)$, 
then $y(t)=1$ and $x(t+1)=x(t)$. 
\end{lemma}
\begin{proof}
Trivial from (\ref{eq: 1-node model}).
\end{proof}
This shows that there exists an input value that 
sets an arbitrary value at the output while preserving the state value.

\begin{figure}[t]
	\centering
	\includegraphics{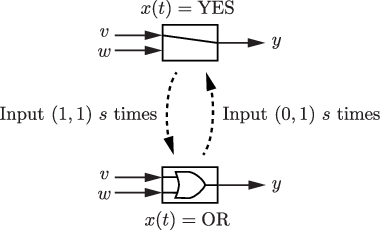}
	\vspace*{2mm}
	\caption{Classical conditioning gate.}
	\label{fig: classical conditioning device}
\end{figure}

\subsection{Network of Classical Conditioning Gates}
Now, we introduce a network of classical conditioning gates,
as shown in Fig.~\ref{fig: network}.
The network has a binary tree structure, where each gate,
except for the leftmost gates, is connected to two other gates on the input side. 
The network has $m$ layers, indexed by  
$1, 2, \ldots, m$ from the input side.
The $i$-th layer has $2^{m-i}$ gates, and thus 
the network has $\sum_{i=1}^m  2^{m-i}$ gates.
We use $n_i$ and $n$ to denote these numbers, i.e., 
$n_i= 2^{m-i}$ and $n=\sum_{i=1}^m  2^{m-i}=\sum_{i=1}^m  n_i$. 

We introduce the following notation for the network. 
The network with $m$ layers is denoted by $\Sigma(m)$.
The $j$-th gate from the top in the $i$-layer is 
called {\it node} $(i,j)$,
and let $x_{ij}(t)\in\{{\rm YES}, {\rm OR}\}$, 
$v_{ij}(t)\in\{0,1\}$, $w_{ij}(t)\in\{0,1\}$, and $y_{ij}(t)\in\{0,1\}$
 be 
the state, first input, second input, and output of node $(i,j)$, respectively. 
The pair of $v_{ij}(t)$ and $w_{ij}(t)$ is often denoted by $u_{ij}(t)\in\{0,1\}^2$.    
We use $\bar{x}_{ij}(t)$ to represent 
the flipped value of $x_{ij}(t)$: 
$\bar{x}_{ij}(t)={\rm YES}$ for ${x_{ij}(t)}={\rm OR}$, while 
$\bar{x}_{ij}(t)={\rm OR}$ for ${x_{ij}(t)}={\rm YES}$. 
   
Next, let us consider the collection of signals.
We use ${\mathbf N}:=\{(1,1), (1,2),\ldots,$ $ (m,1)\}$, which has $n$ elements,    
to represent the set of node indeces, and use  
${\mathbf N}_i \subset {\mathbf N}$ to represent the set of node indeces in layer $i$.
Let $X_i(t)\in \{{\rm YES}, {\rm OR}\}^{n_i}$,
$U_i(t)\in\prod_{j=1}^{n_i}( \{0,1\} \times \{0,1\})$, and  
$Y_i(t)\in\{0,1\}^{n_i}$ be the collective state, 
input, and output of the $i$-th layer. 
Let also $X(t):=(X_1(t),X_2(t),\ldots,X_m(t)) \in
\{{\rm YES}, {\rm OR}\}^{n_1} \times \{{\rm YES}, {\rm OR}\}^{n_2}
\times \cdots
\times \{{\rm YES}, {\rm OR}\}^{n_m}
= \{{\rm YES}, {\rm OR}\}^{n}$.
According to this notation, 
the state, input, and output of the network $\Sigma(m)$
are denoted by $X(t)$, $U_1(t)$, and $Y_m(t)$, respectively.
Note that $Y_m(t)=y_{m1}(t)$. 

Fig.~\ref{fig: example of Sigma(3)} shows an example for $m=3$. 
In this case, we have ${\mathbf N}=\{(1,1),(1,2), (1,3),$\\ $ (1,4), (2,1), (2,2), (3,1)\}$,  
$X_1(t)= (x_{11}(t), x_{12}(t), x_{13}(t), x_{14}(t))$, 
$X_2(t)= (x_{21}(t),$ $ x_{22}(t))$, 
$X_3(t)= x_{31}(t)$,
$U_1(t)= (u_{11}(t), u_{12}(t), u_{13}(t), u_{14}(t))$, 
$U_2(t)= (u_{21}(t), u_{22}(t))$, 
$U_3(t)= u_{31}(t)$,
$Y_1(t)= (y_{11}(t), y_{12}(t), y_{13}(t), y_{14}(t))$, 
$Y_2(t)= (y_{21}(t), y_{22}(t))$, and 
$Y_3(t)= y_{31}(t)$.

\begin{figure}[t]
	\centering
	\includegraphics{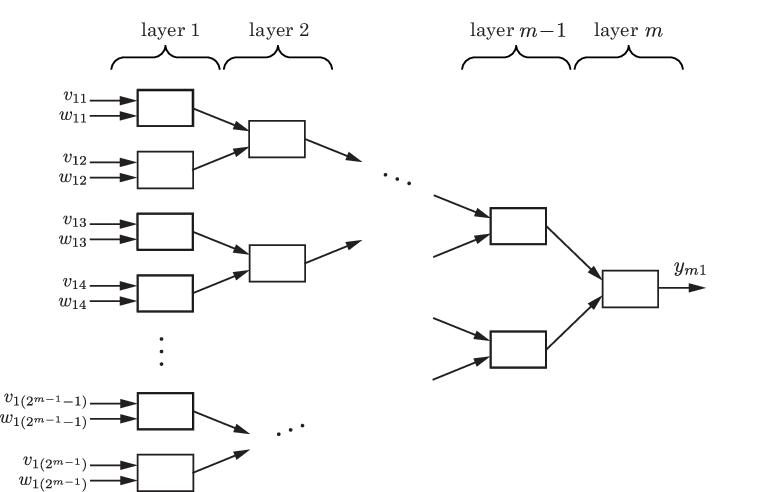}\vspace*{2mm}
	\caption{Network of classical conditioning gates.}
	\label{fig: network}
\end{figure}

\begin{figure}[t]
	\centering
	\includegraphics{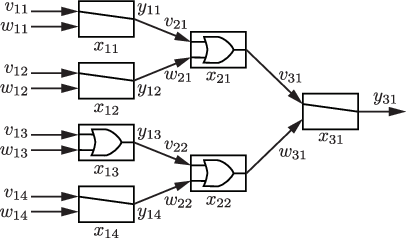}\vspace*{2mm}
	\caption{Network $\Sigma(3)$ with 
	$X(0)=({\rm YES},{\rm YES},{\rm OR},{\rm YES},{\rm OR},{\rm OR},{\rm YES})$.}
	\label{fig: example of Sigma(3)}
\end{figure}

Since a classical conditioning gate operates as either logical YES or OR,
the possible input-output relation of $\Sigma(m)$ is limited to 
logical OR of some of the inputs of $\Sigma(m)$.  
Moreover, the output $y_m(t)$ always depends on the input $v_{11}(t)$ because 
$v_{11}(t)$ propagates through nodes $(i,1)$ ($i=1,2,\ldots,m$)  
operating as logical YES or OR.  
For example, 
$y_3(0)=v_{11}(0)\vee v_{12}(0)$ for the network $\Sigma(3)$ in Fig.~\ref{fig: example of Sigma(3)}. 
This fact is formalized as follows. 
\begin{lemma}\label{lem: possible input-output relation}
Consider the network $\Sigma(m)$ with $X(t)=\bar{X}$, 
where  $t\in\{0,1,\ldots\}$ and
$\bar{X} \in\{{\rm YES}, {\rm OR}\}^n$ are arbitrarily given.
The following statements hold. \\ 
(i) The input-output relation at time $t$ is given by 
\begin{eqnarray}
y_m(t) = 
\left(\bigvee_{i\in {\mathbf J}_1} v_{1j}(t) \right)\vee
\left(\bigvee_{i\in {\mathbf J}_2} w_{1j}(t) \right)
\label{eq: y(T)}
\end{eqnarray}
for some ${\mathbf J}_1\subseteq\{1,2,\ldots,n_1\}$ and
${\mathbf J}_2\subseteq\{1,2,\ldots,n_1\}$. \\
(ii) If (\ref{eq: y(T)}) holds, then $1\in {\mathbf J}_1$. 
Moreover, if $v_{11}(t)=1$, then $y_m(t)=1$.~\hfill\hfill$\Box$
\end{lemma}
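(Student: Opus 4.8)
The plan is to prove both statements by a single structural induction over the layers, exploiting the binary tree interconnection. First I would make the interconnection explicit: for $i\ge 2$ each node $(i,j)$ receives its two inputs from the outputs of its two children in layer $i-1$, so that $v_{ij}(t)=y_{i-1,2j-1}(t)$ and $w_{ij}(t)=y_{i-1,2j}(t)$, with layer $1$ being the leaves carrying the external inputs $v_{1k}(t),w_{1k}(t)$. The key elementary fact, read off from the output equation in (\ref{eq: 1-node model}), is that every node's output is a logical OR over a subset of its own two inputs that always contains the first input: when $x_{ij}(t)={\rm YES}$ the output equals $v_{ij}(t)$, and when $x_{ij}(t)={\rm OR}$ it equals $v_{ij}(t)\vee w_{ij}(t)$. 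Thus at the single-node level the output is an OR of inputs in which the first input is never dropped.

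Next I would set up the induction hypothesis on $i$: for every node $(i,j)$ there exist index sets $\mathbf{A}_{ij},\mathbf{B}_{ij}\subseteq\{1,\ldots,n_1\}$, supported on the leaves of the subtree rooted at $(i,j)$, such that
\begin{eqnarray}
y_{ij}(t) = \left(\bigvee_{k\in \mathbf{A}_{ij}} v_{1k}(t)\right)\vee
\left(\bigvee_{k\in \mathbf{B}_{ij}} w_{1k}(t)\right),
\nonumber
\end{eqnarray}
and moreover the leftmost leaf of that subtree belongs to $\mathbf{A}_{ij}$. The base case $i=1$ is immediate from the single-node fact: node $(1,j)$ gives $\mathbf{A}_{1j}=\{j\}$ and $\mathbf{B}_{1j}\in\{\emptyset,\{j\}\}$, so $j\in\mathbf{A}_{1j}$. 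For the inductive step with $i\ge 2$, I would substitute $v_{ij}(t)=y_{i-1,2j-1}(t)$ and $w_{ij}(t)=y_{i-1,2j}(t)$ into the single-node expression and apply the hypothesis to the two children: in the YES case $y_{ij}(t)$ equals the left child's output, and in the OR case it is the OR of the two children's outputs. Since an OR of two ORs over leaf inputs is again an OR over leaf inputs, the desired form is preserved; and since the left child always contributes, the leftmost leaf of the subtree rooted at $(i,j)$ survives in $\mathbf{A}_{ij}$.

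Statement (i) then follows by specializing to the root $(m,1)$, whose subtree is the whole network with leaf set $\{1,\ldots,n_1\}$: take $\mathbf{J}_1=\mathbf{A}_{m1}$ and $\mathbf{J}_2=\mathbf{B}_{m1}$ to recover (\ref{eq: y(T)}). For statement (ii), the leftmost leaf of the entire tree is node $(1,1)$, i.e.\ index $1$, so the induction gives $1\in\mathbf{J}_1$; consequently $v_{11}(t)$ is one of the disjuncts in (\ref{eq: y(T)}), and if $v_{11}(t)=1$ the whole OR evaluates to $1$, giving $y_m(t)=1$. The only genuinely delicate point, and the part I would write most carefully, is the binary-tree bookkeeping: verifying that the first input of each node $(i,1)$ is exactly the output of $(i-1,1)$, so that the chain of ``first inputs'' runs straight down to $v_{11}(t)$ and the leftmost-leaf claim is maintained through the induction; the remainder is routine propagation of ORs.
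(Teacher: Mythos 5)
Your proof is correct, and it follows essentially the same route as the paper: the paper states this lemma without a formal proof, justifying it only by the informal observation that each gate computes an OR of a subset of its inputs always containing the first one, so that $v_{11}(t)$ propagates along the leftmost chain of nodes $(i,1)$ to the output. Your structural induction over layers (with the leftmost-leaf invariant) is a faithful and correct formalization of exactly that argument.
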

  
The following result shows a structural property that indicates 
which inputs affect node $(i,j)$.  
\begin{lemma}\label{lem: control input block}
Consider the network $\Sigma(m)$ with $X(t)=\bar{X}$ and a node 
$(i,j)\in\{2,3,\ldots,m\}\times\{1,2,\ldots,n_i\}$, 
where  $t\in\{0,1,\ldots\}$ and
$\bar{X} \in\{{\rm YES}, {\rm OR}\}^n$ are arbitrarily given.
Assume that the input $U(t)$ is divided into $2n_i$ blocks 
in the same size and let $[U(t)]_{k}\in \{0,1\}^{2^{i-1}}$ be the $k$-th block.   
Then
\begin{eqnarray}
v_{ij}(t) = f_1([U(t)]_{2j-1}),~~ 
w_{ij}(t) = f_2([U(t)]_{2j}) 
\end{eqnarray}
holds for some functions $f_1:\{0,1\}^{2^{i-1}} \to \{0,1\}$ 
and $f_2:\{0,1\}^{2^{i-1}} \to \{0,1\}$.
\end{lemma}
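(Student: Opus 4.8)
The plan is to exploit the binary-tree wiring of $\Sigma(m)$ and argue by induction on the layer index. First I would record the wiring rule that the tree structure imposes: for every node $(i,j)$ with $i\ge 2$, its two inputs are the outputs of the two nodes feeding it from layer $i-1$, namely $v_{ij}(t)=y_{i-1,2j-1}(t)$ and $w_{ij}(t)=y_{i-1,2j}(t)$. This is exactly the pattern witnessed by the $\Sigma(3)$ example, where $v_{31}=y_{21}$, $v_{21}=y_{11}$, and so on. Moreover, from the output equation in (\ref{eq: 1-node model}), each $y_{ij}(t)$ equals either $v_{ij}(t)$ or $v_{ij}(t)\vee w_{ij}(t)$ according to whether the given state of node $(i,j)$ is YES or OR; hence $y_{ij}(t)$ is always a state-determined function of the pair $(v_{ij}(t),w_{ij}(t))$ alone.

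The core of the argument is the following claim, proved by induction on $i$: the output $y_{ij}(t)$ depends only on the $2^{i}$ coordinates of $U(t)$ that form the inputs of the $2^{i-1}$ layer-$1$ nodes lying in the subtree rooted at $(i,j)$, namely coordinates $(j-1)2^{i}+1,\dots,j\cdot 2^{i}$. The base case $i=1$ is immediate, since $y_{1j}(t)$ is a function of $(v_{1j}(t),w_{1j}(t))$, which occupies coordinates $2j-1,2j=(j-1)2+1,\dots,j\cdot 2$. For the inductive step, assume the claim at layer $i-1$. Then $v_{ij}(t)=y_{i-1,2j-1}(t)$ depends only on coordinates $(2j-2)2^{i-1}+1,\dots,(2j-1)2^{i-1}$, and $w_{ij}(t)=y_{i-1,2j}(t)$ depends only on coordinates $(2j-1)2^{i-1}+1,\dots,2j\cdot 2^{i-1}$. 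Since $y_{ij}(t)$ is a function of this pair, it depends only on the union of these two ranges, which telescopes to $(j-1)2^{i}+1,\dots,j\cdot 2^{i}$, completing the induction.

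With the claim in hand, the lemma reduces to relabeling. Partition $U(t)$ into $2n_i$ equal blocks, each of length $2^{m}/(2n_i)=2^{i-1}$, so that block $k$ consists of coordinates $(k-1)2^{i-1}+1,\dots,k\cdot 2^{i-1}$. The coordinate range on which $v_{ij}(t)=y_{i-1,2j-1}(t)$ depends, computed in the inductive step as $(2j-2)2^{i-1}+1,\dots,(2j-1)2^{i-1}$, is precisely block $2j-1$; likewise the range for $w_{ij}(t)=y_{i-1,2j}(t)$ is precisely block $2j$. Taking $f_1,f_2$ to be the state-determined functions produced by the claim at layer $i-1$ then yields $v_{ij}(t)=f_1([U(t)]_{2j-1})$ and $w_{ij}(t)=f_2([U(t)]_{2j})$, which is the assertion.

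I expect the only real friction to be index bookkeeping: confirming that the subtree of $(i,j)$ corresponds to the contiguous coordinate range $(j-1)2^{i}+1,\dots,j\cdot 2^{i}$, that this range splits at its midpoint $2^{i-1}(2j-1)$ into the subtrees of $(i-1,2j-1)$ and $(i-1,2j)$, and that the length-$2^{i-1}$ blocks of the $2n_i$-block partition align exactly with these halves. All of this is forced once the wiring rule $v_{ij}=y_{i-1,2j-1}$, $w_{ij}=y_{i-1,2j}$ is pinned down, so the main obstacle is verifying the off-by-one arithmetic of the block boundaries rather than any conceptual difficulty.
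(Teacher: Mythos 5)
Your proposal is correct, and it follows the same underlying approach as the paper: the paper's entire proof is the single sentence ``It is trivial from the definition of $\Sigma(m)$'' (with a pointer to Fig.~\ref{fig: network}), i.e., an appeal to the binary-tree wiring that you instead work out explicitly. Your induction on the layer index, with the wiring rule $v_{ij}(t)=y_{i-1,2j-1}(t)$, $w_{ij}(t)=y_{i-1,2j}(t)$ and the block arithmetic showing the subtree of $(i,j)$ occupies exactly coordinates $(j-1)2^{i}+1,\dots,j\cdot 2^{i}$, is a rigorous elaboration of precisely that structural fact, and all of your index bookkeeping checks out.
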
  
\begin{proof}
It is trivial from the definition of $\Sigma(m)$. See Fig.~\ref{fig: network}.  
\end{proof}

Lemma~\ref{lem: control input block} is illustrated as follows. 
Consider the network $\Sigma(3)$ in Fig.~\ref{fig: example of Sigma(3)}
and node $(2,1)$. 
Then  $U(t)$ is divided into $4$ blocks (where $2n_2=2^2$):   
$[U(t)]_{1}=(v_{11}(t),w_{11}(t))$,
$[U(t)]_{2}=(v_{12}(t),w_{12}(t))$,
$[U(t)]_{3}=(v_{13}(t),w_{13}(t))$, and 
$[U(t)]_{4}=(v_{14}(t),w_{14}(t))$.
From Lemma~\ref{lem: control input block}, we have  
$v_{21}(t) = f_1(U_{[1]}(t))$ and 
$w_{21}(t) = f_2(U_{[2]}(t))$ for 
some $f_1:\{0,1\}^{2^{i-1}} \to \{0,1\}$ and $f_2:\{0,1\}^{2^{i-1}} \to \{0,1\}$. 
This is consistent with the interdependence between signals 
in Fig.~\ref{fig: example of Sigma(3)}.



\section{Problem Formulation}
For the network $\Sigma(m)$, we address 
the following learning problem.

\begin{problem}\label{prob: training problem}
Consider the network $\Sigma(m)$ with $X(0)=\bar{X}$, 
where $\bar{X} \in\{{\rm YES}, {\rm OR}\}^n$ is arbitrarily given.
Suppose that ${\mathbf J}_1\subseteq\{1,2,\ldots,2^m\}$ and
${\mathbf J}_2\subseteq\{1,2,\ldots,2^m\}$ are given.
Find a time $T\in\{1,2,\ldots\}$ and an input sequence 
$(U(0),U(1), \ldots, U(T-1))$
such that (\ref{eq: y(T)}) holds for $t=T$.~\hfill\hfill$\Box$
\end{problem}

Two remarks are given. 

First, the problem is not always feasible 
because (\ref{eq: y(T)}) cannot be always realized in $\Sigma(m)$.   
For example, as is seen from the output equation in (\ref{eq: 1-node model}),
$y_m(T) = v_{12}(T)\vee v_{22}(T)$ is not possible for any $T\in\{1,2,\ldots\}$.

Second, if the problem is feasible, 
there exists a vector $X^*\in\{{\rm YES}, {\rm OR}\}^n$
such that $X(T)=X^*$ implies (\ref{eq: y(T)}).
For example, consider the system $\Sigma(3)$ in Fig.~\ref{fig: example of Sigma(3)}
and the case where ${\mathbf J}_1=\{1,3,4\}$ and ${\mathbf J}_2=\{3\}$
for (\ref{eq: y(T)}).  
Then, we have $X^*=({\rm YES},{\rm YES},{\rm OR},{\rm YES},{\rm YES},{\rm OR},{\rm YES})$,
for which $X(T)=X^*$ implies 
$y_m(T) = v_{11}(T)\vee v_{13}(T) \vee w_{13}(T) \vee v_{14}(T)$.
Thus, the problem is reduced into finding the input sequence 
to steer the state to $X^*$.

\section{Learning}
Now, we present a solution to 
Problem~\ref{prob: training problem}. 

\subsection{Flipping Principle}
Let us provide a key principle, called the {\it flipping principle}, 
for solving Problem~\ref{prob: training problem}.

The following is a preliminary result to derive the flipping principle.
\begin{lemma}\label{lem: output control lemma}
Consider the network $\Sigma(m)$ with $X(t)=\bar{X}$,
where  $t\in\{0,1,\ldots\}$ and
$\bar{X} \in\{{\rm YES}, {\rm OR}\}^n$ are arbitrarily given.
Then the following statements hold. \\
(i) If $U(t)=(0,0,\ldots,0)\in\{0,1\}^{2n_1}$, 
then $y_m(t)=0$ and $X(t+1)=X(t)$. \\
(ii)  If $U(t)=(1,0,\ldots,0)\in\{0,1\}^{2n_1}$, 
then $y_m(t)=1$ and $X(t+1)=X(t)$.
\end{lemma}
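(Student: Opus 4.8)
The plan is to reduce the network statement to the single-node invariance in Lemma~\ref{lem: state invariant property for 1 node} and to propagate it through the layers by induction on the layer index. The structural fact I would use, coming from the binary tree topology of $\Sigma(m)$ (see Fig.~\ref{fig: network}), is that for $i\in\{2,\ldots,m\}$ the first and second inputs of node $(i,j)$ are exactly the outputs of its two predecessors, i.e. $v_{ij}(t)=y_{i-1,2j-1}(t)$ and $w_{ij}(t)=y_{i-1,2j}(t)$, while in layer $1$ the inputs $v_{1j}(t),w_{1j}(t)$ are the external signals comprising $U(t)$. A uniform observation for both parts is that every node will receive either $(0,0)$ or $(1,0)$, never a training pattern $(1,1)$ or $(0,1)$, so the ``otherwise'' branch of the state equation in (\ref{eq: 1-node model}) applies at each node and its state is preserved; this yields $X(t+1)=X(t)$ in both cases, regardless of the arbitrary value $\bar{X}$.

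For part (i) I would show by induction on $i$ that $y_{ij}(t)=0$ for all $j$. In the base case $i=1$, every layer-$1$ node receives $(v_{1j}(t),w_{1j}(t))=(0,0)$ from $U(t)=(0,\ldots,0)$, so Lemma~\ref{lem: state invariant property for 1 node}(i) gives $y_{1j}(t)=0$ and $x_{1j}(t+1)=x_{1j}(t)$. In the inductive step, if all outputs of layer $i-1$ vanish, then each node $(i,j)$ receives $(y_{i-1,2j-1}(t),y_{i-1,2j}(t))=(0,0)$, and Lemma~\ref{lem: state invariant property for 1 node}(i) again gives $y_{ij}(t)=0$ together with state preservation. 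Taking $i=m$ yields $y_m(t)=y_{m1}(t)=0$.

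For part (ii) I would strengthen the induction hypothesis to: $y_{i1}(t)=1$ and $y_{ij}(t)=0$ for all $j\ge 2$. In the base case, $U(t)=(1,0,\ldots,0)$ feeds $(1,0)$ into node $(1,1)$ and $(0,0)$ into every other layer-$1$ node, so Lemma~\ref{lem: state invariant property for 1 node}(ii) gives $y_{11}(t)=1$ while Lemma~\ref{lem: state invariant property for 1 node}(i) gives $y_{1j}(t)=0$ for $j\ge 2$, with all states preserved. In the inductive step, the spine node $(i,1)$ receives $(y_{i-1,1}(t),y_{i-1,2}(t))=(1,0)$, so by Lemma~\ref{lem: state invariant property for 1 node}(ii) $y_{i1}(t)=1$; any node $(i,j)$ with $j\ge 2$ reads predecessor indices $2j-1,2j\ge 3$, both off the spine, hence receives $(0,0)$ and outputs $0$ by Lemma~\ref{lem: state invariant property for 1 node}(i). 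Taking $i=m$ gives $y_m(t)=y_{m1}(t)=1$.

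There is no genuine obstacle: the result is a layerwise lifting of the one-node invariance. The only point needing slight care is the bookkeeping in part (ii)—verifying that exactly the single spine of top nodes $(i,1)$ carries the value $1$ while every off-spine node stays at $0$—which is precisely what the strengthened hypothesis on the index $j$ records. Since each node sees only $(0,0)$ or $(1,0)$, the state-update branch is never triggered, so state preservation is automatic and independent of $\bar{X}$.
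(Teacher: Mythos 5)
Your proof is correct, and it takes a somewhat different, more self-contained route than the paper's. The paper proves $X(t+1)=X(t)$ by citing the single-node invariance (Lemma~\ref{lem: state invariant property for 1 node}) together with a terse appeal to ``the network structure,'' and then obtains the output values $y_m(t)=0$ and $y_m(t)=1$ by invoking Lemma~\ref{lem: possible input-output relation} --- the structural fact that the network's input-output map is an OR of a subset of external inputs always containing $v_{11}(t)$, a lemma the paper states without proof. You instead run a single layerwise induction that tracks the exact signal pattern: in case (i) every node sees $(0,0)$; in case (ii) the spine nodes $(i,1)$ see $(1,0)$ while every off-spine node $(i,j)$, $j\ge 2$, reads predecessors $2j-1,2j\ge 3$ and hence sees $(0,0)$. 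This yields both the output value at layer $m$ and state preservation at every node from Lemma~\ref{lem: state invariant property for 1 node} alone, never touching Lemma~\ref{lem: possible input-output relation}. What your route buys is rigor and self-containedness: it makes explicit the propagation argument the paper leaves implicit, and it effectively proves inline the special cases of Lemma~\ref{lem: possible input-output relation} that are needed here. What the paper's route buys is brevity and reuse of its structural lemmas, at the cost of resting on a lemma whose own proof is omitted.
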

\begin{proof}
In (i) and (ii), the relation
$X(t+1)=X(t)$ is proven by the network structure of $\Sigma(m)$ and
Lemma~\ref{lem: state invariant property for 1 node}, which 
states that, in (\ref{eq: 1-node model}), $x(t+1)=x(t)$ holds under   
$(v(t),w(t))=(0,0)$ or $(v(t),w(t))=(1,0)$. 
Next, Lemma~\ref{lem: possible input-output relation} (i) (in particular, (\ref{eq: y(T)}))
implies that $y_m(t)=0$ for $U(t)=(0,0,\ldots,0)$,
which proves (i). 
On the other hand, it follows from Lemma~\ref{lem: possible input-output relation} (ii)
that $y_m(t)=1$ for $U(t)=(1,0,\ldots,0)$. This proves (ii). 
\end{proof}
Lemma~\ref{lem: output control lemma} implies that 
there exists an input value for $\Sigma(m)$   
that sets an arbitrary value at the output of $\Sigma(m)$ while preserving the state value.
Note from this lemma that  
the state of $\Sigma(m)$ does not change by an input sequence 
that takes $(0,0,\ldots,0)$ and $(1,0,\ldots,0)$ at each time. 

From Lemma~\ref{lem: output control lemma}, we obtain the {\it flipping principle}  
for learning of $\Sigma(m)$.  
\begin{theorem}\label{the: flipping principle}
Consider the network $\Sigma(m)$ with $X(t)=\bar{X}$ and node $(p,q)$, 
where $t\in\{0,1,\ldots\}$ and $\bar{X}\in\{{\rm YES}, {\rm OR}\}^n$ are 
arbitrarily given.
Then the following statements hold. 
 \\
(i) There exists an input sequence 
$(U_t,U_{t+1},\ldots,U_{t+s-1})\in \{0,1\}^{2n_1}\times \{0,1\}^{2n_1} \times \cdots \times \{0,1\}^{2n_1}$ (Cartesian product of $s$ sets)
such that 
\begin{eqnarray}
x_{ij}(t+s) =\left\{
\begin{array}{lll}
\bar{x}_{ij}(t) &~{\rm if}~(i,j)= (p,q), \\
x_{ij}(t) &~{\rm if}~(i,j)\in {\mathbf N}_p \setminus \{(p,q)\}
\end{array}
\right.
\label{eq: flipping}
\end{eqnarray}
under $U(t) = U_t$, $U(t+1) = U_{t+1}$, 
$\ldots$, $U(t+s-1) = U_{t+s-1}$, where 
$s\in\{0,1,\ldots\}$ is the unit training time defined for 
 (\ref{eq: 1-node model}). \\
(ii)
An input sequence satisfying (\ref{eq: flipping}) is given by 
$(\hat{U}_{(p,q)},\hat{U}_{(p,q)}, \ldots,\hat{U}_{(p,q)})$ (constant on the time interval $\{t,t+1,\ldots,t+s-1\}$), 
where $\hat{U}_{(p,q)} \in \{0,1\}^{2n_1}$ is an input value 
which is divided into $2^{m+1-p}$ blocks in the same size and 
whose blocks are given as follows: 
\begin{eqnarray}
\begin{array}{rl}
\mbox{$(2q-1)$-th block}: & 
\left\{\begin{array}{ll} 
(0,0,0,\ldots,0) & {\rm if}~x_{pq}(t)={\rm OR}, \\ 
 (1,0,0,\ldots,0)  & {\rm if}~x_{pq}(t)={\rm YES}, \\ 
\end{array}\right. \\
\mbox{$2q$-th block}: & (1,0,0,\ldots,0), \\
\mbox{Other blocks}: & (0,0,0,\ldots,0). 
\end{array}
\end{eqnarray}
\end{theorem}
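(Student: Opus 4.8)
The plan is to prove part (ii) directly — exhibiting the explicit constant input and verifying that it realizes (\ref{eq: flipping}) — which immediately yields the existence claim of part (i). Fix the target node $(p,q)$ and the unit training time $s$, and begin by recording the subtree decomposition underlying Lemma~\ref{lem: control input block}. By the binary-tree wiring, the two inputs of node $(p,q)$ are the outputs $v_{pq}=y_{p-1,2q-1}$ and $w_{pq}=y_{p-1,2q}$ of two disjoint subnetworks, each a copy of $\Sigma(p-1)$ fed by exactly one size-$2^{p-1}$ block of $U$; by Lemma~\ref{lem: control input block} these are blocks $2q-1$ and $2q$. For any other node $(p,j)$ with $j\neq q$ the driving blocks $(2j-1,2j)$ are disjoint from $\{2q-1,2q\}$, so each block of $U$ influences at most one input of one layer-$p$ node. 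In the degenerate case $p=1$ the blocks are single wires and $v_{1q},w_{1q}$ are read directly off $U$, so this step is vacuous.

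Next I would compute the effective layer-$p$ inputs produced by the constant value $\hat U_{(p,q)}$ by applying Lemma~\ref{lem: output control lemma} to each subnetwork $\Sigma(p-1)$: a $(0,\ldots,0)$ block returns output $0$ and a $(1,0,\ldots,0)$ block returns output $1$, each while preserving that subnetwork's internal state. Hence block $2q$ gives $w_{pq}=1$; block $2q-1$ gives $v_{pq}=1$ when $x_{pq}(t)={\rm YES}$ and $v_{pq}=0$ when $x_{pq}(t)={\rm OR}$; and every block attached to a node $(p,j)$ with $j\neq q$ is $(0,\ldots,0)$, so $(v_{pj},w_{pj})=(0,0)$. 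Thus the effective input to the target equals $(1,1)$ if $x_{pq}(t)={\rm YES}$ and $(0,1)$ if $x_{pq}(t)={\rm OR}$ — exactly the training input that (\ref{eq: 1-node model}) uses for acquisition, respectively extinction — while every other layer-$p$ node sees $(0,0)$.

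The crucial step is to promote these single-step values to constancy over the whole interval $\{t,\ldots,t+s-1\}$. Since $\hat U_{(p,q)}$ is held fixed and each of its blocks has the form $(0,\ldots,0)$ or $(1,0,\ldots,0)$, Lemma~\ref{lem: output control lemma} preserves the state of every subnetwork (all of layers $1,\ldots,p-1$) at each of the $s$ steps, so the effective layer-$p$ inputs do not drift. Feeding $s$ consecutive copies of $(1,1)$ from state ${\rm YES}$, respectively $(0,1)$ from state ${\rm OR}$, into the state equation of (\ref{eq: 1-node model}) triggers the transition evaluated at time $t+s-1$, which tests $x_{pq}(t)$ directly and gives $x_{pq}(t+s)=\bar{x}_{pq}(t)$; simultaneously each node $(p,j)$ with $j\neq q$ receives $(0,0)$ at every step, so its flip condition fails at every update and Lemma~\ref{lem: state invariant property for 1 node}(i) yields $x_{pj}(t+s)=x_{pj}(t)$ by iteration. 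This is precisely (\ref{eq: flipping}).

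I expect the constancy argument to be the main obstacle. The acquisition/extinction transition in (\ref{eq: 1-node model}) fires only if all $s$ inputs in its window agree, so a change in any lower-layer subtree state that perturbed $v_{pq}$ or $w_{pq}$ at even one step would destroy the flip. Reducing this to the simultaneous $s$-fold application of the state-invariance guarantee of Lemma~\ref{lem: output control lemma} — valid precisely because $\hat U_{(p,q)}$ is constant and block-wise of the two invariance-preserving forms — is what makes the explicit construction correct.
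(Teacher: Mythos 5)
Your proposal is correct and follows essentially the same route as the paper's proof: both decompose the network so that layer $p$ is fed by disjoint copies of $\Sigma(p-1)$ (Lemma~\ref{lem: control input block}), apply Lemma~\ref{lem: output control lemma} to each copy to produce the effective inputs $(0,1)$ or $(1,1)$ at node $(p,q)$ and $(0,0)$ elsewhere while preserving all upstream states, and iterate this for $s$ steps to trigger the transition in (\ref{eq: 1-node model}). If anything, your treatment is slightly more explicit than the paper's on the key point that upstream state preservation at every step prevents the effective layer-$p$ inputs from drifting during the $s$-step window (and you correctly state that the acquisition input $(1,1)$ corresponds to the case $x_{pq}(t)={\rm YES}$, where the paper's closing sentence contains a typo saying ${\rm OR}$).
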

\begin{proof}
Statements (i) and (ii) are proven by  
showing that (\ref{eq: flipping}) holds for 
the input sequence $(\hat{U}_{(p,q)},\hat{U}_{(p,q)}, \ldots,\hat{U}_{(p,q)})$ 
specified in (ii).

By definition, the network $\Sigma(m)$ can be represented 
as the cascade connection of $m$ layers as shown in Fig. \ref{fig: layer expression network}.  
As we can see by comparing 	Figs.~\ref{fig: network} and \ref{fig: layer expression network},  
the entire left side of the $i$-th layer is 
the parallel system of $2n_{i}$ subsystems, denoted by $S_1,S_2,\ldots, S_{2n_{i}}$,
as shown in Fig.~\ref{fig: layer expression network 2}. 
Each subsystem is equivalent to the network of $i-1$ layers, i.e., $\Sigma(i-1)$. 
This allows us to apply Lemma~\ref{lem: output control lemma} to 
each subsystem because 
Lemma~\ref{lem: output control lemma} holds for  any $m\in\{1,2,\ldots\}$.

Now, consider node $(p,q)$. 
Suppose that $\Sigma(m)$ is represented as Fig.~\ref{fig: layer expression network 2} for $i=q$,
and let $Z_{k}(t)\in\{{\rm YES}, {\rm OR}\}^{\nu_p}$ be the state of the subsystem $S_k$, where  
$\nu_p=\sum_{i=1}^{p-1} 2^{i-1}$.
Note here that the following statements are equivalent: 
\begin{itemize}
  \item $Z_k(t+s)=Z_k(t)$ for every $k\in \{1,2,\ldots,2n_p\}$.
  \item $x_{ij}(t+s)=x_{ij}(t)$ for every $(i,j)\in {\mathbf N}_{p-1}$. 
\end{itemize}
We divide the proof  into two cases: $x_{pq}(t)={\rm OR}$ and 
$x_{pq}(t)={\rm YES}$.  
First, we address the case $x_{pq}(t)={\rm OR}$. 
If $x_{pq}(t)={\rm OR}$ and $U(t)=\hat{U}_{(p,q)}$, 
it follows from Lemmas~\ref{lem: control input block} and 
\ref{lem: output control lemma} (Lemma~\ref{lem: output control lemma} is applied to $\Sigma(p-1)$)
that $(v_{pq}(t),w_{pq}(t)) =(0,1)$, 
$(v_{pj}(t),w_{pj}(t)) =(0,0)$ for $j\in\{1,2,\ldots,n_p\}\setminus\{q\}$, 
and $Z_k(t+1)=Z_k(t)$ for $k\in \{1,2,\ldots,2n_p\}$. 
Thus if $U(t) = \hat{U}_{(p,q)}$, $U(t+1) = \hat{U}_{(p,q)}$, 
$\ldots$, $U(t+s-1) = \hat{U}_{(p,q)}$, then 
\begin{itemize}
  \item $x_{pq}(t+s)={\rm YES}=\bar{x}_{pq}(t)$,
  \item $x_{pj}(t+s)=x_{pj}(t)$ for $j\in\{1,2,\ldots,n_p\}\setminus\{q\}$,
  \item $Z_k(t+s)=Z_k(t)$ for $k\in \{1,2,\ldots,2n_p\}$.
\end{itemize}
This implies (\ref{eq: flipping}). 
The other case $x_{pq}(t)={\rm YES}$ can be proven in 
the same manner. 
The only difference is that $(v_{pq}(t),w_{pq}(t)) =(1,1)$ holds
when $x_{pq}(t)={\rm OR}$ and $U(t)=\hat{U}_{(p,q)}$.
\end{proof}

\begin{figure}[t]
	\centering
	\includegraphics{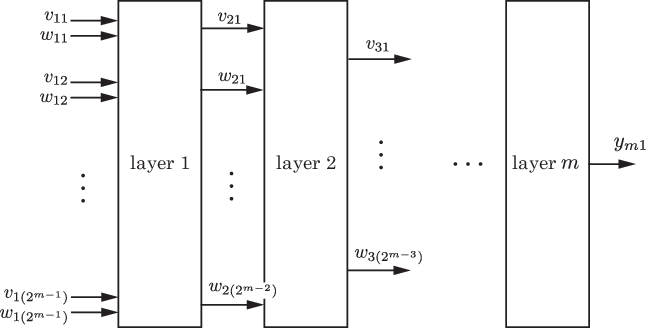}\vspace*{4mm}
	\caption{Layer-based representation of network $\Sigma(m)$.}
	\label{fig: layer expression network}
\end{figure}

\begin{figure}[t]
	\centering
	\includegraphics{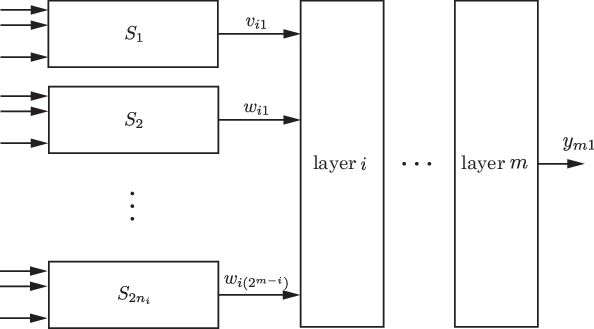}\vspace*{4mm}
	\caption{Another layer-based representation of network $\Sigma(m)$.}
	\label{fig: layer expression network 2}
\end{figure}


Theorem~\ref{the: flipping principle} implies that
we can flip the state of any node 
while preserving the state of the other nodes in 
the layer to which the node to be flipped belongs and its upstream layers.

\begin{example} \label{example 1}
Consider the network $\Sigma(3)$ in Fig.~\ref{fig: example of Sigma(3)},
where $X(0)=({\rm YES},{\rm YES},{\rm OR},$ ${\rm YES},{\rm OR},{\rm OR},{\rm YES})$. 
By the input sequence $(\hat{U}_{(2,1)},\hat{U}_{(2,1)}, \ldots,\hat{U}_{(2,1)})$
for $\hat{U}_{(2,1)}=((1,0,0,0),(0,0,0,0))$, 
the state of node $(2,1)$ is flipped while preserving the states of 
nodes $(1,1)$, $(1,2)$, $(1,3)$, $(1,4)$, and $(2,2)$. 
Fig.~\ref{fig: example of intermediated state} shows 
$\Sigma(3)$ with the resulting state $X(s)$.~\hfill\hfill$\Box$  
\end{example}

\subsection{Learning Algorithm}
Theorem~\ref{the: flipping principle} implies that 
we can steer the state of the network $\Sigma(m)$ from any value to any value
by flipping the state of each node one by one from the upstream node.  
Based on this idea, we obtain the following algorithm to  solve 
Problem \ref{prob: training problem}.

\vspace*{2mm}
\noindent
\underline{\bf Algorithm 1} \vspace*{-1mm}
\begin{description}
\item [(Step 1)]
Let $X^*\in \{{\rm YES}, {\rm OR}\}^{n}$ be a state associated with 
the desired input-output relation in (\ref{eq: y(T)})
and let $x^*_{ij}\in \{{\rm YES}, {\rm OR}\}$ be its element corresponding to node $(i,j)$.  
Let also $k:=0$ and $\hat{\mathbf N}:={\mathbf N}$.

\item [(Step 2)]
Pick the minimum pair $(i,j)$ from $\hat{\mathbf N}$ in lexicographical order.  

\item [(Step 3)] 
If $x_{ij}(ks)\neq x^*_{ij}$,  
apply the input sequence $(\hat{U}_{(i,j)},\hat{U}_{(i,j)},\ldots,\hat{U}_{(i,j)})$
to the network $\Sigma(m)$, 
i.e., $U(ks)=\hat{U}_{(i,j)}, U(ks+1)=\hat{U}_{(i,j)}, \ldots, U(ks+s-1)=\hat{U}_{(i,j)}$, and 
let $k:=k+1$. 
 
\item [(Step 4)] Let $\hat{\mathbf N}:=\hat{\mathbf N}\setminus\{(i,j)\}$. 
If $\hat{\mathbf N}\neq \emptyset$, goto Step 2; otherwise, halt. 
\end{description}
 
In the algorithm, $k$ is a variable to count 
the number of nodes whose state is flipped, and 
$\hat{\mathbf N}$ is the list of the nodes for which 
the algorithm has never checked whether their state needs to be flipped or not.
In Step 1, $X^*$ is defined from (\ref{eq: y(T)}) 
and $k$ and $\hat{\mathbf N}$ are initialized.
Step 2 picks a node $(i,j)$ from the list $\hat{\mathbf N}$.   
Step 3 checks whether the state of node $(i,j)$ has to be flipped or not; 
if it has to be flipped, the state is flipped by applying the training input sequence specified in 
Theorem~\ref{the: flipping principle}. 
Note here that $s$ steps of actual time elapsed 
while applying the input sequence to $\Sigma(m)$. 
In Step 4, node $(i,j)$ is removed from the list $\hat{\mathbf N}$. 
In addition, 
the terminate condition is checked; 
if $\hat{\mathbf N}$ is empty, the algorithm terminates;
otherwise, the above procedure is performed for a remaining node in the list $\hat{\mathbf N}$.

For the algorithm, we obtain the following result. 
\begin{theorem}
Consider Problem~\ref{prob: training problem}. 
Assume that there exists a  vector $X^*\in\{{\rm YES}, {\rm OR}\}^n$
such that (\ref{eq: y(T)}) is equivalent to $X(T)=X^*$. 
Let $k^*\in\{0,1,\ldots\}$ be the value of $k$ when Algorithm 1 terminates 
and ${\mathbb U}(k)$ ($k=0,1,\ldots k^*-1$) are the input sequence generated in Step 3
of Algorithm 1. 
Then a solution to the problem is given by 
$T=k^* s$ and $({\mathbb U}(0),{\mathbb U}(1),\ldots,{\mathbb U}(k^*-1))$.~\hfill\hfill$\Box$  
\end{theorem}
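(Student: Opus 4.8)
The plan is to prove that Algorithm~1, started from $X(0)=\bar{X}$, drives the network to $X(T)=X^*$ at time $T=k^*s$; the assumed equivalence between $X(T)=X^*$ and (\ref{eq: y(T)}) then immediately shows that the returned pair solves Problem~\ref{prob: training problem}. The whole argument rests on Theorem~\ref{the: flipping principle}, which flips a single designated node while leaving a prescribed set of other nodes untouched, combined with the order in which Algorithm~1 visits the nodes.

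First I would fix the visiting order. Because Step~2 always removes the lexicographically minimal index of $\hat{\mathbf N}$, the algorithm treats the nodes in the order $(1,1),(1,2),\ldots,(1,n_1),(2,1),\ldots,(m,1)$, i.e.\ layer by layer from the input side and top to bottom within each layer. I would then record precisely what Theorem~\ref{the: flipping principle} buys: applying $\hat{U}_{(p,q)}$ for $s$ steps flips $x_{pq}$ and, by (\ref{eq: flipping}) together with the relations $Z_k(t+s)=Z_k(t)$ established in the proof of Theorem~\ref{the: flipping principle}, leaves unchanged every node of $\mathbf N_p\setminus\{(p,q)\}$ and every node of the upstream layers $1,\ldots,p-1$, while saying nothing about the downstream layers $p+1,\ldots,m$, whose states may be perturbed.

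The core is an induction on the visiting order establishing the invariant: right after Algorithm~1 processes a node $(i,j)$, every node $(i',j')$ that precedes or equals $(i,j)$ lexicographically already satisfies $x_{i'j'}=x^*_{i'j'}$ at the current time $ks$. In Step~3, if the node is already correct, no input is applied and the invariant is inherited verbatim; otherwise the value $\hat{U}_{(i,j)}$ of Theorem~\ref{the: flipping principle}(ii), determined by the current value of $x_{ij}$, is applied for $s$ steps and (\ref{eq: flipping}) sets $x_{ij}=x^*_{ij}$ while fixing all other nodes of layer $i$ and all of layers $1,\ldots,i-1$. Those fixed nodes are exactly the already-configured predecessors of $(i,j)$, so the invariant propagates to $(i,j)$. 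The point requiring care is that the downstream perturbations are harmless: any subsequent flip acts on a node of some layer $p\ge i+1$ and, by Theorem~\ref{the: flipping principle}, preserves all layers $1,\ldots,p-1$; since $p-1\ge i$, every node already set in layers $1,\ldots,i$ survives, so once a node is configured correctly no later step disturbs it. Exhausting $\hat{\mathbf N}$ therefore yields $X=X^*$ at the terminal time.

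Finally I would account for time and assemble the answer. The counter $k$ advances exactly once per executed flip, and each flip occupies the $s$ consecutive instants $ks,\ldots,ks+s-1$, so after $k$ flips the clock reads $ks$; at termination it reads $k^*s=T$. Concatenating the length-$s$ blocks $\mathbb U(0),\ldots,\mathbb U(k^*-1)$ produced in Step~3 gives an input sequence defined on $\{0,1,\ldots,T-1\}$ realizing $X(T)=X^*$, and the hypothesis identifies this with (\ref{eq: y(T)}) at $t=T$. I expect the main obstacle to be phrasing the induction invariant sharply enough to make the downstream argument rigorous: one must cleanly separate, at every step, the nodes that a flip is guaranteed to fix (same layer and strictly upstream) from those it may disturb (strictly downstream), and verify that the lexicographic visiting order keeps every already-set node permanently in the former class.
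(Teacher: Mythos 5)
Your proof is correct and follows exactly the route the paper intends: the paper states this theorem without any proof, relying on the preceding remark that Theorem~\ref{the: flipping principle} allows the state to be steered ``by flipping the state of each node one by one from the upstream node,'' which is precisely the lexicographic induction you carry out. Your explicit handling of the two key points the paper leaves implicit --- that a flip may perturb strictly downstream layers, and that the upstream-to-downstream visiting order guarantees every already-configured node lies in a layer that later flips are proven to preserve --- supplies the missing detail correctly, including the observation that upstream preservation comes from the $Z_k(t+s)=Z_k(t)$ relations inside the proof of Theorem~\ref{the: flipping principle} rather than from its statement alone.
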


\section{Example}
Consider the network $\Sigma(3)$ in Fig.~\ref{fig: example of Sigma(3)},  
where $X(0)=({\rm YES},{\rm YES},{\rm OR},{\rm YES},{\rm OR},{\rm OR},$ ${\rm OR})$. 
Assume that $s=3$.
Let us show how Algorithm 1 solves Problem~\ref{prob: training problem}
for ${\mathbf J}_1=\{1,3,4\}$ and ${\mathbf J}_2=\{3\}$.

In Step 1, we have $X^*=({\rm YES},{\rm YES},{\rm OR},{\rm YES},{\rm YES},{\rm OR},{\rm YES})$. 
Then the algorithm generates the input sequence 
${\mathbb U}(0)=(\hat{U}_{(2,1)},\hat{U}_{(2,1)},\hat{U}_{(2,1)})$
to flip the state of node $(2,1)$ in Step 3 at $k=0$. 
The result is shown in Fig.~\ref{fig: example of intermediated state}. 
Next, the algorithm generates the input sequence 
${\mathbb U}(1)=(\hat{U}_{(3,1)},\hat{U}_{(3,1)},\hat{U}_{(3,1)})$ in Step 3
at $k=1$, which flips the state of node $(3,1)$. 
As the result, we have the system in Fig.~\ref{fig: example of final state} with  
the output  $y_m(2s) = v_{11}(2s)\vee v_{13}(2s) \vee w_{13}(2s) \vee v_{14}(2s)$.

\begin{figure}[t]
	\centering
	\includegraphics{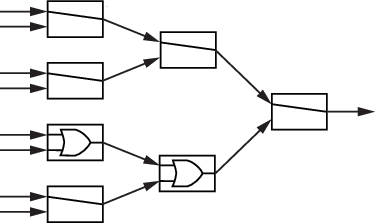}\vspace*{2mm}
	\caption{Network $\Sigma(3)$ at the state $X(s)$, resulted  by
	 the input sequence ${\mathbb U}(0)=(\hat{U}_{(2,1)},\hat{U}_{(2,1)},\hat{U}_{(2,1)})$ that flips  
	 the state of node $(2,1)$.}
	\label{fig: example of intermediated state}
\end{figure}

\begin{figure}[t]
	\centering
	\includegraphics{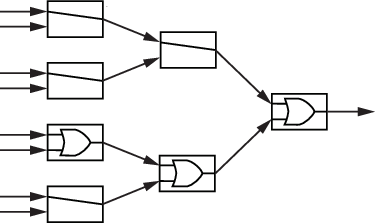}\vspace*{2mm}
	\caption{Network $\Sigma(3)$ at the state $X(2s)$, resulted  by
	 the input sequence ${\mathbb U}(1)=(\hat{U}_{(3,1)},\hat{U}_{(3,1)},\hat{U}_{(3,1)})$ that flips  
	 the state of node $(3,1)$.}
	\label{fig: example of final state}
\end{figure}

\section{Conclusion}
We have presented a learning method for a network of 
nodes each of which can implement classical conditioning.
Based on the principle that the state of any node can 
be flipped while preserving the state of some other nodes,
an algorithm has been derived. 

The proposed algorithm can be applied only to the networks with a tree structure. In the future, we plan to generalize our framework to handle a more general class of networks. 
It is also interesting to address other types of gates, for example, whose state takes logical operators that are not necessarily YES and OR.

\section*{Acknowledgment}
This work was supported by
Grant-in-Aid for Transformative Research Areas (A) 20H05969 
from the Ministry of Education, Culture, Sports, Science and Technology of Japan.

\section*{Declarations}
Conflict of interest: The author declares that he has no conflict of interest.

\end{document}